\pdfoutput=1
\documentclass[11pt, a4paper]{article}

\usepackage[left=2.5cm, right=2.5cm, top=2.5cm, bottom=2.5cm]{geometry}
\usepackage{amsmath, amssymb, amsthm}
\usepackage{booktabs}
\usepackage{enumitem}
\usepackage[colorlinks=true, linkcolor=blue, citecolor=blue, urlcolor=cyan]{hyperref}

\hypersetup{
    pdftitle={The Variance of Thought: Policy Variance, Critical Forks, and Local Credit Assignment},
    pdfauthor={Yingru Li},
}

\newtheorem{theorem}{Theorem}
\newtheorem{proposition}[theorem]{Proposition}
\newtheorem{lemma}[theorem]{Lemma}
\newtheorem{corollary}[theorem]{Corollary}
\theoremstyle{definition}
\newtheorem{definition}[theorem]{Definition}

\newcommand{\E}{\mathbb{E}}
\newcommand{\Var}{\operatorname{Var}}
\newcommand{\Prob}{\mathbb{P}}
\newcommand{\Qpi}{Q^\pi}
\newcommand{\Vpi}{V^\pi}
\newcommand{\Api}{A^\pi}
\newcommand{\sigpi}{\sigma^2_\pi}
\newcommand{\Disp}{\mathcal{D}}

\author{Yingru Li\thanks{Email: \texttt{szrlee@gmail.com}}}
\title{\textbf{The Variance of Thought:\\ Policy Variance, Critical Forks,\\ and Local Credit Assignment}}
\date{First draft: July 14, 2025}

\begin{document}
\maketitle

\begin{abstract}
Long-horizon language-model tasks --- multi-step reasoning and tool-using agents alike --- are limited by credit assignment. We analyze it through the \emph{policy variance} $\sigpi(s)=\Var_{a\sim\pi}[\Qpi(s,a)]$, which in a deterministic MDP is the sole source of return variance and is injected in discrete pulses at states we call \emph{critical forks}. Three results follow. (i) Policy variance is a \emph{discovery budget}: observing an action of advantage $c$ requires $\Omega(c^2/\sigpi(s))$ draws, a bound that is exact on the canonical two-point fork. (ii) Policy variance is bounded by the policy's Gini dispersion, $\sigpi(s)\le 1-\|\pi(\cdot|s)\|_2^2$, a rollout-free necessary condition for criticality computable from logits alone. (iii) The remaining horizon sets the estimation cost: at a fork whose downstream success probability is $P$, the Monte Carlo advantage estimate has signal-to-noise ratio of order $\sqrt{P}$, so its sample cost scales as $1/P$ --- a cost that branched sampling shares. Bootstrapping removes it by converting a product of survival probabilities into a sum, provided the value representation is multiplicatively accurate, which argues for log-value parameterization.
\end{abstract}

\section{Introduction}

When a reasoning or agent trajectory of length $T$ receives a single terminal reward \cite{wei2022chain}, no token-level signal is directly observed. The standard framing treats the resulting gradient variance as noise to be suppressed \cite{williams1992simple,sutton2018reinforcement}. We adopt the complementary framing: in a deterministic environment, \emph{all} return variance originates in the policy's own choices, so the variance one wishes to suppress is exactly the statistic that measures how much any given choice matters. Suppressing it uniformly destroys the learning signal along with the noise.

Making this precise requires separating two questions:

\begin{enumerate}[label=(Q\arabic*), leftmargin=*]
\item \textbf{Local discovery.} At a fixed state $s$, how many action samples are needed to observe an action whose advantage exceeds $c$?
\item \textbf{Long-horizon estimation.} Once observed, how many trajectory samples are needed before its advantage is \emph{estimated} accurately enough to act on?
\end{enumerate}

After the setup of Section~\ref{sec:setup}, the two answers turn out to have different characters: (Q1) is governed by $\sigpi(s)$ and is \emph{polynomial}: at least $\Omega(c^2/\sigpi(s))$ samples, with the bound attained on the canonical fork (Section~\ref{sec:gating}), while (Q2) is governed by the remaining horizon and is \emph{exponential} (Section~\ref{sec:horizon}). The case for local credit assignment rests on (Q2). Section~\ref{sec:implications} develops the algorithmic consequences: a fork taxonomy, a detection procedure whose first stage is free, and the extension to stochastic environments.

\section{Setup: Policy Variance as the Sole Source of Instability}
\label{sec:setup}

We model reasoning as a finite-horizon MDP $(\mathcal S,\mathcal A,P,r,T)$ with $\gamma=1$, states $s_t$ the token prefix, actions $a_t$ the next token, and a single terminal reward $R\in[0,1]$.\footnote{Any bounded $R\in[R_{\min},R_{\max}]$ normalizes to $[0,1]$ affinely; policy gradients are invariant under affine reward transformations. All constants below are stated for the normalized scale.} Write $\Vpi(s)=\E_\pi[R\mid s]$, $\Qpi(s,a)=\E_\pi[R\mid s,a]$, $\Api(s,a)=\Qpi(s,a)-\Vpi(s)$, and $G_t$ for the Monte Carlo return. For pure reasoning the transition $s_{t+1}=s_t\oplus a_t$ is deterministic; Section~\ref{sec:implications} relaxes this. Determinism is used in exactly two places --- to collapse the variance decomposition below to its policy term, and to instantiate the chain of Section~\ref{sec:horizon}; the results of Section~\ref{sec:gating} (Lemma~\ref{lem:adv} through Corollary~\ref{cor:budget}) use only $\Qpi\in[0,1]$ and hold under arbitrary transition dynamics. Two elementary facts are used throughout: $\Qpi,\Vpi\in[0,1]$, hence $\Api(s,a)\in[-\Vpi(s),1-\Vpi(s)]$ and $|\Api|\le1$; and $\E_{a\sim\pi}[\Api(s,a)]=0$.

\begin{definition}[Policy variance]
\label{def:polvar}
\begin{equation*}
\sigpi(s)\;\triangleq\;\Var_{a\sim\pi(\cdot|s)}\big[\Qpi(s,a)\big]
\;=\;\Var_{a\sim\pi(\cdot|s)}\big[\Api(s,a)\big]
\;=\;\E_{a\sim\pi}\big[\Api(s,a)^2\big],
\end{equation*}
the equalities holding because $\Vpi(s)$ is constant at fixed $s$ and $\Api$ is centered.
\end{definition}

\begin{proposition}[Variance recursion]
\label{prop:recursion}
Let $\mathcal V_t\triangleq\E_{s_t\sim d^\pi_t}[\Var(G_t\mid s_t)]$. Then $\mathcal V_t=\E[C_t]+\gamma^2\mathcal V_{t+1}$ with
\begin{equation}
C_t(s_t)=\underbrace{\E_{a_t}\big[\Var_{s_{t+1}}\!\big(r_t+\gamma \Vpi(s_{t+1})\big)\big]}_{\text{environmental}}+\underbrace{\sigpi(s_t)}_{\text{policy}},
\label{eq:decomp}
\end{equation}
hence $\mathcal V_t=\sum_{k=t}^{T-1}\gamma^{2(k-t)}\E[C_k]$. Under deterministic transitions the environmental term vanishes identically, so with $\gamma=1$, $\mathcal V_t=\sum_{k\ge t}\E[\sigpi(s_k)]$.
\end{proposition}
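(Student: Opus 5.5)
The plan is to apply the law of total variance twice, conditioning first on the action $a_t$ and then on the next state $s_{t+1}$, using the Markov property so that conditional expectations of future returns reduce to $\Qpi$ and $\Vpi$. Write $G_t=r_t+\gamma G_{t+1}$, where $r_t$ is a function of $(s_t,a_t,s_{t+1})$; with a single terminal reward, $r_t=0$ for $t<T-1$. I will treat $\Var(G_t\mid s_t)$ at a fixed state and only average over $s_t\sim d^\pi_t$ at the end.

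\emph{Step 1 (condition on $s_{t+1}$).} Fix $s_t$. The law of total variance gives
\[
\Var(G_t\mid s_t)=\E\big[\Var(G_t\mid s_t,a_t,s_{t+1})\mid s_t\big]+\Var\big(\E[G_t\mid s_t,a_t,s_{t+1}]\mid s_t\big).
\]
By the Markov property, $G_{t+1}$ given $(s_t,a_t,s_{t+1})$ is distributed as the return from $s_{t+1}$. Hence the inner mean is $r_t+\gamma\Vpi(s_{t+1})$, and the inner variance is $\gamma^2\Var(G_{t+1}\mid s_{t+1})$, since $r_t$ is fixed given the triple. Averaging over $s_t\sim d^\pi_t$, the first term becomes $\gamma^2\E_{s_{t+1}\sim d^\pi_{t+1}}[\Var(G_{t+1}\mid s_{t+1})]=\gamma^2\mathcal V_{t+1}$, because $d^\pi_{t+1}$ is the pushforward of $d^\pi_t$ under $\pi$ and $P$.

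\emph{Step 2 (split the remaining term on $a_t$).} Let $Y=r_t+\gamma\Vpi(s_{t+1})$ and apply total variance to $\Var(Y\mid s_t)$, now conditioning on $a_t$:
\[
\Var(Y\mid s_t)=\E_{a_t}\big[\Var_{s_{t+1}}(Y\mid s_t,a_t)\big]+\Var_{a_t}\big(\E[Y\mid s_t,a_t]\big).
\]
The first piece is exactly the environmental term of \eqref{eq:decomp}. For the second piece, $\E[Y\mid s_t,a_t]=\Qpi(s_t,a_t)$ by the Bellman equation, so it equals $\Var_{a\sim\pi}[\Qpi(s_t,a)]=\sigpi(s_t)$ by Definition~\ref{def:polvar}. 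Together these give $\mathcal V_t=\E[C_t]+\gamma^2\mathcal V_{t+1}$.

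\emph{Step 3 (unroll and specialize).} Unrolling the recursion down to the terminal condition $\mathcal V_T=0$ (the return after the horizon is identically zero) gives the telescoped sum $\mathcal V_t=\sum_{k=t}^{T-1}\gamma^{2(k-t)}\E[C_k]$. Under deterministic transitions, $s_{t+1}$ and $r_t$ are functions of $(s_t,a_t)$, so the conditional variance in the environmental term is zero pointwise; setting $\gamma=1$ then yields $\mathcal V_t=\sum_{k\ge t}\E[\sigpi(s_k)]$.

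The only delicate point is the Markov/measurability bookkeeping in Step 1. Specifically, I need that the conditional law of $G_{t+1}$ given $(s_t,a_t,s_{t+1})$ depends only on $s_{t+1}$, and that $r_t$ is measurable with respect to that triple. In the token-prefix MDP this holds trivially because the state is the full prefix; I would state it explicitly so the argument also covers the stochastic relaxation used in Section~\ref{sec:implications}.
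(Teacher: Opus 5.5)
Your proof is correct and is essentially the paper's argument: both apply the law of total variance twice along the chain $s_t\subset(s_t,a_t)\subset(s_t,a_t,s_{t+1})$, using the Markov property and the Bellman identity $\E[r_t+\gamma\Vpi(s_{t+1})\mid s_t,a_t]=\Qpi(s_t,a_t)$. The only difference is the order: the paper conditions on $a_t$ first and then splits $\Var(G_t\mid s_t,a_t)$ on $s_{t+1}$, whereas you condition on $(a_t,s_{t+1})$ jointly and then split the explained variance on $a_t$, and both orders yield the same three terms.
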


\begin{proof}
Appendix~\ref{app:recursion}.
\end{proof}

In deterministic reasoning, return variance is therefore not a diffuse background but a sum of state-local pulses --- which motivates:

\begin{definition}[Critical fork]
\label{def:fork}
Fix $\tau>0$. A state $s$ is a \emph{$\tau$-critical fork} if $\sigpi(s)\ge\tau$.
\end{definition}

\section{Local Discovery Is Polynomially Gated}
\label{sec:gating}

\subsection{Advantage Is Gated by Action Probability}

\begin{lemma}[Advantage--probability bound]
\label{lem:adv}
For any $\pi$ and any $(s,a)$ with $\Qpi\in[0,1]$,
\begin{equation}
-\big(1-\pi(a|s)\big)\big(1-\Qpi(s,a)\big)\;\le\;\Api(s,a)\;\le\;\big(1-\pi(a|s)\big)\Qpi(s,a),
\label{eq:advbound}
\end{equation}
so in particular $|\Api(s,a)|\le 1-\pi(a|s)$.
\end{lemma}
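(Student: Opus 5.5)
The plan is to write $\Vpi(s)$ as a mixture that separates the action $a$ from all the others. Abbreviate $p=\pi(a|s)$ and $q=\Qpi(s,a)$, and let $\bar Q$ be the average of $\Qpi(s,a')$ over $a'\neq a$, weighted by $\pi(a'|s)/(1-p)$. This is defined when $p<1$. The case $p=1$ is trivial, since then $\Vpi(s)=q$, so $\Api(s,a)=0$ and both sides of \eqref{eq:advbound} are $0$.

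For $p<1$ the decomposition reads
\begin{equation*}
\Vpi(s)=p\,q+(1-p)\bar Q .
\end{equation*}
Subtracting this from $q$ gives the exact identity
\begin{equation*}
\Api(s,a)=q-\Vpi(s)=(1-p)\,(q-\bar Q).
\end{equation*}
The whole lemma then reduces to bounding $q-\bar Q$. Because $\bar Q$ is a convex combination of values of $\Qpi$, the standing assumption $\Qpi\in[0,1]$ gives $\bar Q\in[0,1]$.

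This yields both inequalities at once. The bound $\bar Q\ge 0$ gives $q-\bar Q\le q$, which is the upper bound $(1-p)q$. The bound $\bar Q\le 1$ gives $q-\bar Q\ge q-1=-(1-q)$, which is the lower bound $-(1-p)(1-q)$.

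For the corollary $|\Api(s,a)|\le 1-p$, note that $q\le 1$ and $1-q\le 1$, so each side of \eqref{eq:advbound} has absolute value at most $1-p$. No dynamics enter anywhere; only $\Qpi\in[0,1]$ is used, which is consistent with the remark in Section~\ref{sec:setup}.

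I do not expect any step to be a real obstacle. The only care needed is the degenerate case $p=1$, where $\bar Q$ is undefined, and that case was handled separately at the start. For sums over a countable or continuous action space, the same argument goes through with $\bar Q$ defined as a conditional expectation.
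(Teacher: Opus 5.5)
Your proof is correct and is essentially the paper's argument: the paper also isolates the $a'=a$ term to get $\Api(s,a)=(1-\pi(a|s))\Qpi(s,a)-\sum_{a'\neq a}\pi(a'|s)\Qpi(s,a')$, then bounds the remaining sum between $0$ and $1-\pi(a|s)$ using $\Qpi\in[0,1]$. The only difference is that you normalize that sum into the conditional average $\bar Q$, which forces your separate treatment of $\pi(a|s)=1$; the paper leaves the sum unnormalized, so it needs no case split.
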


\begin{proof}
Appendix~\ref{app:adv}.
\end{proof}

By~\eqref{eq:advbound}, high-advantage actions live in the tail of $\pi(\cdot|s)$: the search for improvement is a search over low-probability tokens. Combining the lemma with Definition~\ref{def:polvar} yields a bound on policy variance requiring no rollouts at all.

\begin{corollary}[Dispersion bound]
\label{cor:disp}
Let $\Disp(s)\triangleq 1-\|\pi(\cdot|s)\|_2^2$ denote the Gini--Simpson dispersion of the policy at $s$. Then
\begin{equation}
\sigpi(s)\;\le\;\sum_a\pi(a|s)\big(1-\pi(a|s)\big)^2\;\le\;\Disp(s),
\label{eq:disp}
\end{equation}
so if $\Disp(s)<\tau$ then $s$ is not a $\tau$-critical fork (Definition~\ref{def:fork}).
\end{corollary}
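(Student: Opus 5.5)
The plan is to chain Definition~\ref{def:polvar} with Lemma~\ref{lem:adv} pointwise in $a$, and then compare the resulting weighted sum with the dispersion term by term. By Definition~\ref{def:polvar}, $\sigpi(s)=\E_{a\sim\pi}[\Api(s,a)^2]=\sum_a\pi(a|s)\Api(s,a)^2$. This form is the one to use, rather than $\Var[\Qpi]$, because it is a nonnegative combination of squared advantages, and each of those can be bounded separately.

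For the first inequality, I would invoke the consequence $|\Api(s,a)|\le 1-\pi(a|s)$ of Lemma~\ref{lem:adv}. Squaring both sides is legitimate because both are nonnegative, and gives $\Api(s,a)^2\le(1-\pi(a|s))^2$. Multiplying by $\pi(a|s)\ge0$ and summing over $a$ yields $\sigpi(s)\le\sum_a\pi(a|s)(1-\pi(a|s))^2$.

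For the second inequality, I would drop one factor of $1-\pi(a|s)\in[0,1]$, using $(1-\pi)^2\le 1-\pi$ termwise. This gives $\sum_a\pi(a|s)(1-\pi(a|s))^2\le\sum_a\pi(a|s)(1-\pi(a|s))=1-\sum_a\pi(a|s)^2=\Disp(s)$, where the last step uses $\sum_a\pi(a|s)=1$. The final claim about $\tau$-critical forks is then the contrapositive of Definition~\ref{def:fork}: if $\Disp(s)<\tau$, then $\sigpi(s)\le\Disp(s)<\tau$, so $s$ is not a $\tau$-critical fork.

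There is no real obstacle here. The only point that needs care is to work with the second-moment form $\E[\Api^2]$ of the policy variance, which Definition~\ref{def:polvar} already justifies through the centering $\E_{a\sim\pi}[\Api]=0$, so that the pointwise bound from Lemma~\ref{lem:adv} applies directly. If the action space is countably infinite, the sums are still absolutely convergent because every term lies in $[0,\pi(a|s)]$, so the termwise comparisons remain valid.
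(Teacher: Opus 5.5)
Your proposal is correct and is essentially the paper's proof: write $\sigpi(s)$ as the second moment $\sum_a\pi(a|s)\Api(s,a)^2$, bound each term termwise with $|\Api(s,a)|\le 1-\pi(a|s)$ from Lemma~\ref{lem:adv}, and then apply $(1-p)^2\le 1-p$ on $[0,1]$. You also spell out two steps the paper leaves implicit: the identity $\sum_a\pi(a|s)(1-\pi(a|s))=\Disp(s)$, and the contrapositive giving the $\tau$-critical-fork conclusion.
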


\begin{proof}
$\sigpi(s)=\sum_a\pi(a|s)\Api(s,a)^2\le\sum_a\pi(a|s)(1-\pi(a|s))^2$ by Lemma~\ref{lem:adv}, and $(1-p)^2\le 1-p$ on $[0,1]$.
\end{proof}

Condition~\eqref{eq:disp} costs $O(|\mathcal A|)$ time on the next-token distribution and no generation. On the two-point policy $(\varepsilon,1-\varepsilon)$ with value gap $\Delta$, $\sigpi=\varepsilon(1-\varepsilon)\Delta^2$ and $\Disp=2\varepsilon(1-\varepsilon)$, so the gap is a factor $2/\Delta^2$ --- within two of tight at maximal value gap, loose when values barely diverge. The condition is necessary but not sufficient: a policy dispersed over value-equivalent actions (paraphrases, formatting tokens) has $\Disp$ large and $\sigpi=0$. Dispersion screens; only rollouts or a critic confirm.

\subsection{Policy Variance Is a Discovery Budget}

We now answer (Q1) by bounding the tail of the single centered draw $\Api(s,\cdot)$, which has variance $\sigpi(s)$ and range in $[-1,1]$.

\begin{theorem}[Advantage sampling bound]
\label{thm:gate}
Fix $s$ and let $a\sim\pi(\cdot|s)$, $\sigma^2\triangleq\sigpi(s)$. For every $c>0$:
\begin{align}
\text{(one-sided)}\qquad &\Prob\big(\Api(s,a)\ge c\big)\;\le\;\frac{\sigma^2}{\sigma^2+c^2},\label{eq:cantelli}\\[2pt]
\text{(two-sided)}\qquad &\Prob\big(|\Api(s,a)|\ge c\big)\;\le\;\frac{\sigma^2}{c^2},\label{eq:chebyshev}\\[2pt]
\text{(success-rate)}\qquad &\Prob\big(\Api(s,a)\ge c\big)\;\le\;\frac{\min\{\Vpi(s),\,1-\Vpi(s)\}}{c}.\label{eq:markov}
\end{align}
\end{theorem}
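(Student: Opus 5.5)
The plan treats $X\triangleq\Api(s,a)$ with $a\sim\pi(\cdot|s)$ as a single real random variable and applies the three classical tail inequalities (Cantelli, Chebyshev, Markov) to it. The only structural facts needed are those recorded in Section~\ref{sec:setup} and Definition~\ref{def:polvar}: $\E[X]=0$, $\E[X^2]=\sigma^2$, and $X\in[-\Vpi(s),\,1-\Vpi(s)]$, because $\Qpi\in[0,1]$. No use of determinism is required, consistent with the remark that this section holds under arbitrary dynamics.

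For the two-sided bound~\eqref{eq:chebyshev}, apply Chebyshev's inequality directly. Since $X$ is centered, $\Prob(|X|\ge c)\le \E[X^2]/c^2=\sigma^2/c^2$.

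For the one-sided bound~\eqref{eq:cantelli}, use the standard Cantelli argument with a shift parameter $u\ge0$:
\[
\Prob(X\ge c)\le\Prob\big((X+u)^2\ge (c+u)^2\big)\le \frac{\E[(X+u)^2]}{(c+u)^2}=\frac{\sigma^2+u^2}{(c+u)^2},
\]
where $\E[X]=0$ is used in the last equality. Choosing $u=\sigma^2/c$ minimizes the right-hand side. Substituting gives $(\sigma^2+\sigma^4/c^2)/(c+\sigma^2/c)^2=\sigma^2/(\sigma^2+c^2)$, which is the claimed bound.

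For the success-rate bound~\eqref{eq:markov}, there are two cases to combine.
\begin{itemize}
\item The $\Vpi(s)$ branch. Shift to the nonnegative variable $Y=X+\Vpi(s)=\Qpi(s,a)\in[0,1]$, so that $\E[Y]=\Vpi(s)$. By Markov, $\Prob(X\ge c)=\Prob(Y\ge c+\Vpi)\le \Vpi/(c+\Vpi)\le \Vpi/c$.
\item The $1-\Vpi(s)$ branch. Use centering: $\E[X_+]=\E[X_-]\le \E[(-X)_+]$, and $-X\le\Vpi$. A cleaner route is $\E[X_+]=\tfrac12\E|X|$. I would instead bound $\E[X_+]$ directly, via $X_+\le (1-\Vpi)\mathbf 1\{X>0\}$ and $\E[X_+]=\E[X_-]\le \Vpi$, giving $\E[X_+]\le\min\{\Vpi,1-\Vpi\}$. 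Then Markov on $X_+$ gives $\Prob(X\ge c)\le \E[X_+]/c$.
\end{itemize}

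The main obstacle is this last step, specifically getting the $1-\Vpi$ branch. The clean fact to establish is $\E[X_+]\le\min\{\Vpi,1-\Vpi\}$. The $1-\Vpi$ half follows from $X_+\le 1-\Vpi$ pointwise together with $\Prob(X>0)\le1$. The $\Vpi$ half follows from $\E[X_+]=\E[X_-]$ and $X_-\le\Vpi$ pointwise. Note that the $\Vpi$ half alone would also drop out of the shifted Markov argument above, so the pointwise bound on $X_+$ is the only genuinely new ingredient; everything else is routine.
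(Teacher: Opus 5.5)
Your proposal is correct and follows the paper's proof essentially line for line: Cantelli via the shift $u=\sigma^2/c$, Chebyshev applied directly, and $\E[X^+]=\E[X^-]\le\min\{\Vpi(s),1-\Vpi(s)\}$ from the pointwise range bounds, followed by Markov on $X^+$. The detours in your last bullet, namely $\E[X_+]=\tfrac12\E|X|$ and the sharper shifted-Markov bound $\Vpi/(c+\Vpi)$, are valid but not needed.
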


\begin{proof}
Appendix~\ref{app:gate}: \eqref{eq:cantelli} is Cantelli's inequality, \eqref{eq:chebyshev} Chebyshev's, and \eqref{eq:markov} follows from $\E[(\Api)^+]=\E[(\Api)^-]\le\min\{\Vpi,1-\Vpi\}$ with Markov's inequality.
\end{proof}

\begin{proposition}[Exactness on the canonical fork]
\label{prop:tight}
Let $\pi(a_e|s)=\varepsilon$, $\pi(a_d|s)=1-\varepsilon$ with $\Qpi(s,a_e)=1$, $\Qpi(s,a_d)=0$, so $\sigma^2=\varepsilon(1-\varepsilon)$, $\Api(s,a_e)=1-\varepsilon$, $\Api(s,a_d)=-\varepsilon$. At $c=1-\varepsilon$, bound~\eqref{eq:cantelli} holds with equality:
$\sigma^2/(\sigma^2+c^2)=\varepsilon=\Prob(\Api\ge1-\varepsilon)$.
\end{proposition}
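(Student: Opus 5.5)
The claim is a direct computation on a two-point distribution, so the plan is to evaluate each quantity in Proposition~\ref{prop:tight} explicitly and compare the two sides of \eqref{eq:cantelli}.

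First compute the value. With $\Qpi(s,a_e)=1$ and $\Qpi(s,a_d)=0$ we get $\Vpi(s)=\varepsilon\cdot1+(1-\varepsilon)\cdot0=\varepsilon$. Hence the advantages are $\Api(s,a_e)=1-\varepsilon$ and $\Api(s,a_d)=-\varepsilon$, as stated. By Definition~\ref{def:polvar}, $\sigma^2=\E_{a\sim\pi}[\Api^2]=\varepsilon(1-\varepsilon)^2+(1-\varepsilon)\varepsilon^2=\varepsilon(1-\varepsilon)$.

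Next evaluate the right-hand side of \eqref{eq:cantelli} at $c=1-\varepsilon$. Factoring $(1-\varepsilon)$ from the denominator gives
\[
\sigma^2+c^2=\varepsilon(1-\varepsilon)+(1-\varepsilon)^2=(1-\varepsilon)\bigl(\varepsilon+1-\varepsilon\bigr)=1-\varepsilon,
\]
so the ratio is $\varepsilon(1-\varepsilon)/(1-\varepsilon)=\varepsilon$. On the left-hand side, the event $\{\Api(s,a)\ge1-\varepsilon\}$ occurs exactly when $a=a_e$. The only other value, $-\varepsilon$, lies strictly below $1-\varepsilon$, since that requires only $\varepsilon<1$. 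Therefore $\Prob(\Api\ge1-\varepsilon)=\pi(a_e|s)=\varepsilon$, and the two sides agree.

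The only step needing care is the range of $\varepsilon$. We need $\varepsilon\in(0,1)$ so that $c>0$, the division by $1-\varepsilon$ is legitimate, and the event picks out only $a_e$. This is implicit in calling $\pi$ a genuine two-point policy, and I would state it explicitly.

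As a sanity remark, this is the standard extremal case of Cantelli's inequality: a two-point law attains the bound $\sigma^2/(\sigma^2+c^2)$ when its mass sits at $c$ and at $-\sigma^2/c$. Here $-\sigma^2/c=-\varepsilon(1-\varepsilon)/(1-\varepsilon)=-\varepsilon$, which is consistent with the computation above.
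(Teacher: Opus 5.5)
Your proposal is correct and matches the paper's proof: both do a direct computation on the two-point law and note the same Cantelli equality condition (atoms at $c$ and $-\sigma^2/c$) that you give in your closing remark. One trivial nit: $-\varepsilon<1-\varepsilon$ holds for every $\varepsilon$, so the restriction $\varepsilon\in(0,1)$ is needed only for $c>0$ and for the division, not to make the event pick out $a_e$ alone.
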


\begin{proof}
Direct computation; Appendix~\ref{app:gate} gives the general equality condition of Cantelli's inequality, of which this is an instance.
\end{proof}

The second-moment bounds are the sharp ones for a single draw; exponential-tail inequalities \cite{boucheron2013concentration} require a sample count to exponentiate, and Appendix~\ref{app:bernstein} quantifies their behavior in this setting. Restated as a sample requirement, Theorem~\ref{thm:gate} becomes a budget:

\begin{corollary}[Discovery budget]
\label{cor:budget}
Let $p_c(s)\triangleq\Prob(\Api(s,a)\ge c\mid s)$ and draw $a^{(1)},\dots,a^{(N)}\stackrel{iid}{\sim}\pi(\cdot|s)$. Then $1/p_c(s)\ge 1+c^2/\sigpi(s)$ and
\begin{equation}
\Prob\Big(\exists i:\Api(s,a^{(i)})\ge c\Big)\;\le\;N\,p_c(s)\;\le\;\frac{N}{1+c^2/\sigpi(s)},
\end{equation}
so discovery probability $\delta$ requires $N\ge\delta\big(1+c^2/\sigpi(s)\big)=\Omega\big(c^2/\sigpi(s)\big)$.
\end{corollary}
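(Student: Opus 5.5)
The corollary is a restatement of the one-sided bound~\eqref{eq:cantelli} of Theorem~\ref{thm:gate}, combined with a union bound over the $N$ draws. The plan has three short steps.

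\textbf{Step 1: the reciprocal bound.} By~\eqref{eq:cantelli}, $p_c(s)\le \sigma^2/(\sigma^2+c^2)$ with $\sigma^2=\sigpi(s)$.
\begin{itemize}[leftmargin=*]
\item If $p_c(s)>0$, take reciprocals to get $1/p_c(s)\ge(\sigma^2+c^2)/\sigma^2=1+c^2/\sigpi(s)$.
\item If $p_c(s)=0$, read $1/p_c=\infty$ and the inequality holds trivially.
\item If $\sigpi(s)=0$, then $\Api(s,\cdot)=0$ $\pi$-almost surely, so $p_c=0$ for every $c>0$. Both sides are $+\infty$ under the convention $c^2/0=\infty$, and the displayed probability bound is $0$.
\end{itemize}
These degenerate cases are the only place where some care is needed.

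\textbf{Step 2: the discovery probability.} Since $a^{(i)}$ is sampled from $\pi(\cdot|s)$ at the fixed state $s$, the event $\{\Api(s,a^{(i)})\ge c\}$ has probability $p_c(s)$ for each $i$. Independence is not even required here. The union bound gives
\[
\Prob\Big(\exists i:\Api(s,a^{(i)})\ge c\Big)\le\sum_{i=1}^N p_c(s)=N\,p_c(s).
\]
Substituting Step 1 in the form $p_c(s)\le 1/(1+c^2/\sigpi(s))$ yields the second inequality.

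\textbf{Step 3: the sample requirement.} Suppose the discovery probability is at least $\delta$. Chaining with Step 2 gives $\delta\le N/(1+c^2/\sigpi(s))$, which rearranges to $N\ge\delta(1+c^2/\sigpi(s))$. For fixed $\delta\in(0,1]$ this is $\Omega(c^2/\sigpi(s))$, because $\delta(1+c^2/\sigpi)\ge\delta\,c^2/\sigpi$.

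None of the steps is a real obstacle. The only step needing attention is the handling of the $\sigpi(s)=0$ and $p_c=0$ edge cases in Step 1, so that the reciprocal manipulation is well defined.
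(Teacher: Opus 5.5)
Your proposal is correct and follows the paper's own route. The paper gives no separate proof and presents the corollary as Theorem~\ref{thm:gate} restated as a sample requirement: invert the Cantelli bound~\eqref{eq:cantelli}, apply a union bound over the $N$ draws, and rearrange. Your explicit treatment of the degenerate cases $p_c(s)=0$ and $\sigpi(s)=0$ is a small addition that the paper leaves implicit.
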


Policy variance is thus the reciprocal per-sample discovery rate: it \emph{is} the sampling budget, in rollouts --- tight by Proposition~\ref{prop:tight} and directly actionable as a branch count. On hard tasks the success-rate bound~\eqref{eq:markov} binds instead: with $\Vpi(s)=v\ll1$, the Bhatia--Davis inequality \cite{bhatia2000better} gives $\sigma^2\le v(1-v)$, so \eqref{eq:chebyshev} yields $\approx v/c^2$ while \eqref{eq:markov} yields the sharper $v/c$ for $c\le1$. On a problem the policy rarely solves, positive-advantage discoveries are rare in proportion to the success rate --- the formal version of the observation that RL from verifiable rewards \cite{ouyang2022training} produces almost no gradient signal at near-zero pass rate.

\section{Long-Horizon Estimation Is Exponentially Costly}
\label{sec:horizon}

Section~\ref{sec:gating} assumed exact $\Qpi$. In practice it is estimated from rollouts; this is (Q2), and it is where horizon enters. Consider the canonical structure: states $s_0,\dots,s_m$, a binary choice at each $s_j$, the wrong branch absorbing with reward $0$, and reward $1$ at $s_m$. With $p_j=\pi(\text{correct}\mid s_j)$ and $P_j\triangleq\prod_{k=j}^{m-1}p_k$ ($P_m=1$), Appendix~\ref{app:chain} gives
\begin{equation}
\Vpi(s_j)=P_j,\qquad
\Api(s_j,\text{correct})=(1-p_j)P_{j+1},\qquad
\sigpi(s_j)=p_j(1-p_j)P_{j+1}^2 .
\label{eq:chain}
\end{equation}
Note from~\eqref{eq:chain} that $|\Api(s_j,\text{correct})|\le 1-p_j$, consistent with Lemma~\ref{lem:adv}, with equality iff the suffix is deterministic; and that Corollary~\ref{cor:budget} at $c=(1-p_j)P_{j+1}$ evaluates to exactly $1/p_j$ draws --- the budget bound is attained on the chain as well.

\begin{proposition}[Monte Carlo estimation cost]
\label{prop:snr}
Condition on visiting $s_j$ and taking the correct action, and let $\hat A^{\mathrm{MC}}=G_j-\Vpi(s_j)$ with $G_j\sim\mathrm{Bernoulli}(P_{j+1})$ be the single-rollout advantage estimate with exact baseline. Then $\hat A^{\mathrm{MC}}$ is unbiased with
\begin{equation}
\mathrm{SNR}_j\;\triangleq\;\frac{\big|\Api(s_j,\mathrm{correct})\big|}{\sqrt{\Var(\hat A^{\mathrm{MC}})}}
\;=\;(1-p_j)\sqrt{\frac{P_{j+1}}{1-P_{j+1}}},
\end{equation}
so unit SNR requires averaging
\begin{equation}
n_j\;\asymp\;\frac{1}{(1-p_j)^2\,P_{j+1}}\;=\;\frac{1}{(1-p_j)^2}\exp\!\Big(\textstyle\sum_{k>j}\log(1/p_k)\Big)
\end{equation}
rollouts --- exponential in the remaining horizon $m-j$ whenever the $p_k$ are bounded away from $1$. The wrong branch, by contrast, yields $\hat A^{\mathrm{MC}}=-P_j$ deterministically: the failure signal is noiseless, and the statistical cost falls entirely on estimating the successful branch.
\end{proposition}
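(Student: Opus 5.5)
The plan is a direct computation on the chain of Appendix~\ref{app:chain}, using the closed forms in~\eqref{eq:chain}. First I will pin down the law of $G_j$. Conditional on being at $s_j$ and taking the correct action, the trajectory moves deterministically to $s_{j+1}$. From there it reaches reward $1$ exactly when every later choice $k=j+1,\dots,m-1$ is correct, and otherwise it is absorbed with reward $0$. Because the choices are independent draws from $\pi$, $G_j\sim\mathrm{Bernoulli}(P_{j+1})$, with $P_{j+1}=\prod_{k>j}p_k$, as the statement asserts.

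\emph{Unbiasedness.} Using $\Vpi(s_j)=P_j=p_jP_{j+1}$ from~\eqref{eq:chain},
\[
\E[\hat A^{\mathrm{MC}}]=P_{j+1}-p_jP_{j+1}=(1-p_j)P_{j+1},
\]
which is exactly $\Api(s_j,\mathrm{correct})$ in~\eqref{eq:chain}.

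\emph{Variance and SNR.} The baseline is a constant, so $\Var(\hat A^{\mathrm{MC}})=\Var(G_j)=P_{j+1}(1-P_{j+1})$. Dividing,
\[
\mathrm{SNR}_j=\frac{(1-p_j)P_{j+1}}{\sqrt{P_{j+1}(1-P_{j+1})}}=(1-p_j)\sqrt{\frac{P_{j+1}}{1-P_{j+1}}}.
\]

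\emph{Sample count.} Averaging $n$ i.i.d.\ rollouts leaves the mean unchanged and divides the variance by $n$, so the SNR scales by $\sqrt n$. Unit SNR therefore needs
\[
n_j=\frac{1-P_{j+1}}{(1-p_j)^2P_{j+1}}.
\]
To write this as $n_j\asymp 1/((1-p_j)^2P_{j+1})$, I will assume $P_{j+1}$ is bounded away from $1$ (for instance $P_{j+1}\le 1/2$), so that $1-P_{j+1}\in[1/2,1]$. This asymptotic step is the only point needing care. If $P_{j+1}=1$, the suffix is deterministic, the variance is zero and the SNR is infinite; I will flag this degenerate case explicitly rather than hide it in $\asymp$. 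The exponential form follows from $1/P_{j+1}=\exp(\sum_{k>j}\log(1/p_k))$. If $p_k\le 1-\eta$ for all $k$, each summand is at least $\log\frac{1}{1-\eta}>0$, so the sum grows linearly in $m-j$ and $n_j$ grows exponentially in the remaining horizon.

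\emph{Wrong branch.} Taking the wrong action leads to the absorbing state with reward $0$ surely, so $G_j=0$ almost surely and $\hat A^{\mathrm{MC}}=-P_j$ deterministically. This equals $\Qpi(s_j,\mathrm{wrong})-\Vpi(s_j)=0-P_j$, so the estimate is exact with zero variance. All the estimation cost therefore sits on the successful branch.
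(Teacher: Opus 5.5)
Your proposal is correct and matches the paper's proof in Appendix~\ref{app:chain} essentially step for step. The steps are the same: the Bernoulli law of $G_j$, unbiasedness via $P_j=p_jP_{j+1}$, the variance $P_{j+1}(1-P_{j+1})$, the exact count $n_j=(1-p_j)^{-2}(1-P_{j+1})/P_{j+1}$, and the deterministic wrong branch. Your explicit treatment of the $\asymp$ step is a small refinement the paper leaves implicit; the needed bound $1-P_{j+1}\ge\eta$ holds automatically when $p_k\le1-\eta$ and $j\le m-2$, and the degenerate case $P_{j+1}=1$ occurs only at $j=m-1$.
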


\begin{proof}
Appendix~\ref{app:chain}.
\end{proof}

The advantage and the value are of the same order $P_{j+1}$; the difficulty is that unbiased Bernoulli estimation of an exponentially small quantity costs its reciprocal. The cost is shared by every sampling scheme: branching at the fork estimates $\Qpi(s_j,\text{correct})=P_{j+1}$ by rollout from $s_{j+1}$ at the same $1/P_{j+1}$, so tree search \cite{yao2024tree} inherits it, and a whole-trajectory method faces the joint event that all $m$ forks resolve correctly, of probability $P_0=\exp(-\sum_k\log(1/p_k))$. A local estimator, by contrast, needs each fork resolved only marginally, conditional on reaching it.

\begin{proposition}[Log-space additivity]
\label{prop:log}
On the chain, $\Vpi(s_j)=p_j\Vpi(s_{j+1})$, hence
\begin{equation*}
\log \Vpi(s_j)=\log p_j+\log \Vpi(s_{j+1}),
\end{equation*}
so each bootstrapped log-space regression target is a single $O(1)$ increment and the log-value is a sum of $m-j$ of them. A critic fit in value space must instead resolve absolute differences of order $(1-p_j)P_{j+1}$; uniform absolute accuracy $\delta$ carries no information once $\delta\gtrsim(1-p_j)P_{j+1}$.
\end{proposition}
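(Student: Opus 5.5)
The plan is to derive the recursion $\Vpi(s_j)=p_j\Vpi(s_{j+1})$ directly from the chain structure and take logarithms. For the claim about value-space critics, compare the advantage magnitude from~\eqref{eq:chain} against an absolute error $\delta$.

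\textbf{Step 1 (recursion).} At $s_j$, condition on the first action. With probability $p_j$ the policy takes the correct branch and moves deterministically to $s_{j+1}$, where the continuation value is $\Vpi(s_{j+1})$. With probability $1-p_j$ it enters the absorbing wrong branch, which has reward $0$. So $\Vpi(s_j)=p_j\Vpi(s_{j+1})+(1-p_j)\cdot 0$. This is consistent with $\Vpi(s_j)=P_j=p_jP_{j+1}$ from~\eqref{eq:chain}, and the boundary case $\Vpi(s_m)=P_m=1$ holds. When all $p_k>0$, every $\Vpi(s_j)>0$. Taking logs then gives $\log\Vpi(s_j)=\log p_j+\log\Vpi(s_{j+1})$. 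Unrolling from $j$ to $m$ with $\log\Vpi(s_m)=0$ yields $\log\Vpi(s_j)=\sum_{k=j}^{m-1}\log p_k$, a sum of $m-j$ increments. Each increment $\log p_k\in[\log p_{\min},0]$ is $O(1)$ whenever the $p_k$ are bounded away from $0$. This is the precise sense of ``single $O(1)$ increment'': the bootstrapped target $\log p_j+\widehat{\log V}(s_{j+1})$ differs from the bootstrapped input by a bounded amount, independent of the horizon.

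\textbf{Step 2 (value-space resolution).} By~\eqref{eq:chain}, $\Qpi(s_j,\text{correct})-\Qpi(s_j,\text{wrong})=P_{j+1}-0$, and $\Api(s_j,\text{correct})=(1-p_j)P_{j+1}$. I will formalize ``carries no information'' as an indistinguishability statement. Take a critic $\hat V$ with $\sup_s|\hat V(s)-\Vpi(s)|\le\delta$, and form the induced advantage estimate $\hat A=\hat V(s_{j+1})-\hat V(s_j)$ (the natural one-step bootstrapped estimate, since rewards are terminal). Its error is at most $2\delta$. Now suppose $\delta\ge(1-p_j)P_{j+1}/2$. Then the zero-advantage critic, which sets $\hat V(s_{j+1})=\hat V(s_j)$ at the midpoint, is within $\delta$ of the truth at both states. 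So a $\delta$-accurate critic is compatible with advantage $0$ and even with the wrong sign, and the sign of $\Api$ cannot be certified.

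Conversely, a multiplicative error $|\log\hat V-\log\Vpi|\le\eta$ leaves the ratio $\hat V(s_{j+1})/\hat V(s_j)$ within a factor $e^{2\eta}$ of $1/p_j$. That recovers the sign whenever $2\eta<\log(1/p_j)$, which is independent of $P_{j+1}$.

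\textbf{Main obstacle.} The logarithmic identity itself is trivial. The delicate part is making ``carries no information'' a theorem rather than a heuristic. The plan is to state it as the two-critic indistinguishability argument above: exhibit the true $\Vpi$ and a perturbed $\tilde V$ that are both within $\delta$ of each other, and whose induced advantages at $s_j$ differ in sign. The constant $2$ is absorbed into $\gtrsim$.
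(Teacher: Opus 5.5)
Your proposal is correct and follows the paper's proof. Like the paper, you derive $\Vpi(s_j)=p_j\Vpi(s_{j+1})>0$ from the absorbing wrong branch, take logarithms, and compare the up-to-$2\delta$ error of the TD advantage $\hat V(s_{j+1})-\hat V(s_j)$ against the signal $(1-p_j)P_{j+1}$. Your midpoint-critic construction sharpens the paper's argument: the paper only notes that the error bound $2\delta$ exceeds the signal, whereas you exhibit a $\delta$-accurate critic with zero advantage (or wrong-sign advantage, if the inequality on $\delta$ is strict), which is what ``carries no information'' actually requires.
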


\begin{proof}
Appendix~\ref{app:chain}.
\end{proof}

Together, Propositions~\ref{prop:snr} and~\ref{prop:log} identify what long horizons require: a \emph{local credit resolver at forks} --- a learned critic \cite{schulman2015high,schulman2017proximal}, a value-equipped tree search \cite{silver2016mastering}, or any bootstrapped estimator --- and one that is multiplicatively calibrated (Proposition~\ref{prop:log}). A log-value critic meets both requirements; that it should dominate at long horizons is the framework's main empirical prediction.

\section{Algorithmic Implications}
\label{sec:implications}

\paragraph{Fork taxonomy.}
For binary forks with minority mass $\varepsilon\in(0,\tfrac12]$ and value gap $\Delta$, $\sigpi=\varepsilon(1-\varepsilon)\Delta^2$, and the pair $(\sigpi,H)$ of variance and policy entropy identifies $(\varepsilon,\Delta)$ uniquely (Appendix~\ref{app:taxonomy}); neither statistic alone suffices. Two regimes result. At a \textbf{Type I (uncertainty)} fork, $\varepsilon\approx\tfrac12$: variance is maximal and a value-divergent action is drawn within $O(1)$ samples, so the difficulty is \emph{decision} --- the appropriate intervention is a wider trust region permitting a decisive update. At a \textbf{Type II (exploration)} fork, $\varepsilon\ll1$: discovery requires $\Theta(1/\varepsilon)$ samples --- the lower bound is Corollary~\ref{cor:budget} at $c=1-\varepsilon$, and each draw hits the minority branch with probability $\varepsilon$, matching it --- and the difficulty is \emph{retention} of the rare positive update against the policy's confident default. The same $1/\varepsilon$ sample count reappears at the gradient level: the single-sample REINFORCE gradient at such a fork has SNR $\sqrt\varepsilon$ (Appendix~\ref{app:taxonomy}), so $\sigpi$ governs the gradient estimator as well as the return.

\paragraph{Amortized versus non-amortized resolution.}
The two implementations of a local credit resolver trade compute against bias:

\begin{center}
\begin{tabular}{@{}lll@{}}
\toprule
& \textbf{Learned critic $V_\phi$ (amortized)} & \textbf{Branching rollouts (non-amortized)}\\
\midrule
Per-fork cost & $O(1)$ forward pass & $\Omega(c^2/\sigpi)$ rollouts (Cor.~\ref{cor:budget})\\
Horizon cost & $O(1)$ via bootstrapping & $\Theta(1/P_{j+1})$ (Prop.~\ref{prop:snr})\\
Error source & approximation bias & sampling variance\\
Requirement & multiplicative accuracy (Prop.~\ref{prop:log}) & none\\
\bottomrule
\end{tabular}
\end{center}

The columns are complementary: branching rollouts supply the targets from which the critic is amortized, provided they are spent where the budget analysis says they are affordable.

\paragraph{Fork detection with a free first stage.}
(1)~\emph{Screen:} compute $\Disp(s_t)$ from the next-token distribution; if $\Disp(s_t)<\tau$, skip --- $s_t$ is not a $\tau$-critical fork by Corollary~\ref{cor:disp}. (2)~\emph{Budget:} for survivors, allocate $N\asymp c^2/\tau$ rollouts at detection scale $c$, guided by Corollary~\ref{cor:budget}, over $k$ candidate tokens drawn from both the top-$p$ head and the tail (covering both fork types), sharing the prefix KV cache. (3)~\emph{Allocate and estimate:} distribute rollouts by maximum standard error $\hat\sigma_i/\sqrt{N_i}$ (round-robin until $N_i\ge2$; Welford updates), then form $\widehat{\sigpi}(s_t)=\sum_i\pi_\theta(a_i|s_t)(\hat\mu_i-\bar\mu)^2$ and subtract its upward bias $\sum_i\pi_\theta(a_i)\hat\sigma_i^2/N_i$ --- with small $N_i$ the uncorrected estimator over-reports criticality exactly on the high-noise states where rollouts were scarce. (4)~\emph{Detect and classify:} flag $\widehat{\sigpi}$ as an outlier against an EWMA baseline, then classify by entropy: $H_t$ high $\Rightarrow$ Type I, low $\Rightarrow$ Type II.

\paragraph{Stochastic environments.}
For agents, the discovery-budget results of Section~\ref{sec:gating} apply unchanged, and the estimation cost of Proposition~\ref{prop:snr} only grows, since environmental variance adds to the estimator's noise. What changes is that the environmental term in~\eqref{eq:decomp} no longer vanishes, with three consequences. The dispersion screen bounds only the policy component, so it screens for policy-criticality alone. The TD advantage acquires residual variance $\Var_{s_{t+1}}(r_t+\gamma\Vpi(s_{t+1}))$ even under an exact critic --- a floor no critic accuracy removes. And attribution becomes ambiguous: an observed return deviation mixes the two terms of~\eqref{eq:decomp}, so the detector must separate them, requiring repeated rollouts from the same $(s_t,a_t)$ rather than from $s_t$ alone. This last point --- fork detection as a variance-\emph{decomposition} problem with uncharacterized sample cost --- is the open problem we consider most substantive.

\section{Conclusion}

Credit assignment in long-horizon reasoning is governed by two quantities of different character. Policy variance sets the local discovery budget --- $\Omega(c^2/\sigpi(s))$ samples, attained exactly on the two-point fork and bounded for free by the policy's Gini dispersion --- while the remaining horizon sets an exponential estimation cost shared by every sampling scheme, branched search included. Bootstrapping converts the underlying product of survival probabilities into a sum, and does so usefully under a multiplicatively accurate value representation. The framework yields a falsifiable prediction (log-value critics should dominate at long horizons), a free screening rule for fork detection, and a principled rollout budget in place of a hyperparameter.

\clearpage
\appendix

\section{Proof of Proposition~\ref{prop:recursion}}
\label{app:recursion}

\begin{proof}
Apply the law of total variance $\Var(X)=\E[\Var(X|Y)]+\Var(\E[X|Y])$ twice.

\emph{Step 1: condition on $a_t$.}
\[
\Var(G_t\mid s_t)=\E_{a_t}\big[\Var(G_t\mid s_t,a_t)\big]+\Var_{a_t}\big(\E[G_t\mid s_t,a_t]\big).
\]
Since $\E[G_t\mid s_t,a_t]=\Qpi(s_t,a_t)$, the second term is $\sigpi(s_t)$ by Definition~\ref{def:polvar}.

\emph{Step 2: condition on $s_{t+1}$.}
\[
\Var(G_t\mid s_t,a_t)=\E_{s_{t+1}}\big[\Var(G_t\mid s_t,a_t,s_{t+1})\big]+\Var_{s_{t+1}}\big(\E[G_t\mid s_t,a_t,s_{t+1}]\big).
\]
With $G_t=r_t+\gamma G_{t+1}$ and $r_t$ determined by $(s_t,a_t,s_{t+1})$,
\[
\E[G_t\mid s_t,a_t,s_{t+1}]=r_t+\gamma \Vpi(s_{t+1}),\qquad
\Var(G_t\mid s_t,a_t,s_{t+1})=\gamma^2\Var(G_{t+1}\mid s_{t+1}).
\]

\emph{Step 3: combine.}
\[
\Var(G_t\mid s_t)=\underbrace{\E_{a_t}\big[\Var_{s_{t+1}}(r_t+\gamma \Vpi(s_{t+1}))\big]+\sigpi(s_t)}_{=\,C_t(s_t)}
+\gamma^2\,\E_{a_t,s_{t+1}}\big[\Var(G_{t+1}\mid s_{t+1})\big].
\]
Taking $\E_{s_t\sim d^\pi_t}$ and using $\E_{s_t\sim d^\pi_t}\E_{a_t,s_{t+1}}[\,\cdot\,]=\E_{s_{t+1}\sim d^\pi_{t+1}}[\,\cdot\,]$ gives $\mathcal V_t=\E[C_t]+\gamma^2\mathcal V_{t+1}$; unrolling from $\mathcal V_T=0$ yields the sum. Under deterministic transitions $s_{t+1}$ is a point mass given $(s_t,a_t)$, so $\Var_{s_{t+1}}(\cdot)=0$ identically.
\end{proof}

\section{Proof of Lemma~\ref{lem:adv}}
\label{app:adv}

\begin{proof}
Write $\Vpi(s)=\sum_{a'}\pi(a'|s)\Qpi(s,a')$ and separate the $a'=a$ term:
\begin{align*}
\Api(s,a)&=\Qpi(s,a)-\pi(a|s)\Qpi(s,a)-\sum_{a'\neq a}\pi(a'|s)\Qpi(s,a')\\
&=\big(1-\pi(a|s)\big)\Qpi(s,a)-\sum_{a'\neq a}\pi(a'|s)\Qpi(s,a').
\end{align*}
For the upper bound use $\Qpi(s,a')\ge0$, dropping the sum:
\[
\Api(s,a)\le\big(1-\pi(a|s)\big)\Qpi(s,a).
\]
For the lower bound use $\Qpi(s,a')\le1$ and $\sum_{a'\neq a}\pi(a'|s)=1-\pi(a|s)$:
\[
\Api(s,a)\ge\big(1-\pi(a|s)\big)\Qpi(s,a)-\big(1-\pi(a|s)\big)=-\big(1-\pi(a|s)\big)\big(1-\Qpi(s,a)\big).
\]
Both bounds have magnitude at most $1-\pi(a|s)$ since $\Qpi(s,a)\in[0,1]$.
\end{proof}

\section{Proof of Theorem~\ref{thm:gate}}
\label{app:gate}

\begin{proof}
Let $X=\Api(s,a)$ with $a\sim\pi(\cdot|s)$; then $\E[X]=0$, $\Var(X)=\sigma^2$, $X\in[-\Vpi(s),1-\Vpi(s)]$.

\emph{(i) One-sided (Cantelli).} For any $u>0$, since $c+u>0$,
\[
\Prob(X\ge c)=\Prob(X+u\ge c+u)\le\Prob\big((X+u)^2\ge(c+u)^2\big)\le\frac{\E[(X+u)^2]}{(c+u)^2}=\frac{\sigma^2+u^2}{(c+u)^2},
\]
using Markov's inequality. Minimizing over $u$ gives $u^\star=\sigma^2/c$, at which the right side equals $\sigma^2/(\sigma^2+c^2)$.

\emph{Equality condition.} The bound is attained by the two-point law with mass $\sigma^2/(\sigma^2+c^2)$ at $c$ and mass $c^2/(\sigma^2+c^2)$ at $-\sigma^2/c$ (mean $0$, variance $\sigma^2$ by direct verification). Setting $c=1-\varepsilon$, $\sigma^2=\varepsilon(1-\varepsilon)$ recovers exactly the configuration of Proposition~\ref{prop:tight}: atoms at $1-\varepsilon$ and $-\varepsilon$ with masses $\varepsilon$ and $1-\varepsilon$.

\emph{(ii) Two-sided (Chebyshev).} $\Prob(|X|\ge c)\le\E[X^2]/c^2=\sigma^2/c^2$. (Summing~\eqref{eq:cantelli} over both tails gives $2\sigma^2/(\sigma^2+c^2)$, weaker than $\sigma^2/c^2$ whenever $c\ge\sigma$.)

\emph{(iii) Success-rate.} Since $\E[X]=0$, $\E[X^+]=\E[X^-]$. From $X\le1-\Vpi(s)$: $\E[X^+]\le1-\Vpi(s)$; from $X\ge-\Vpi(s)$: $\E[X^-]\le \Vpi(s)$. Hence $\E[X^+]\le\min\{\Vpi(s),1-\Vpi(s)\}$, and Markov's inequality on $X^+$ gives $\Prob(X\ge c)\le\E[X^+]/c$.
\end{proof}

\section{Why Bernstein's Inequality Is Vacuous Here}
\label{app:bernstein}

Applied to the single draw $\Api(s,a)$, Bernstein's inequality reads
\begin{equation}
\Prob\big(|\Api(s,a)|\ge c\big)\;\le\;2\exp\!\left(-\frac{c^2/2}{\sigma^2+c/3}\right).
\label{eq:bern}
\end{equation}
The $c^2$ in the numerator suggests exponential decay in $c$; this appendix records why no such decay is available on the admissible domain. Write $E(c,\sigma^2)=\dfrac{c^2/2}{\sigma^2+c/3}$, so that~\eqref{eq:bern} is $2e^{-E}$.

\emph{Domain.} $\Qpi\in[0,1]\Rightarrow|\Api|\le1\Rightarrow$ the event $\{|\Api|\ge c\}$ is empty for $c>1$, so only $c\in(0,1]$ is meaningful. By Bhatia--Davis applied to $\Qpi\in[0,1]$ with mean $\Vpi(s)$, $\sigma^2\le \Vpi(s)(1-\Vpi(s))\le1/4$.

\emph{Maximum of the exponent.} $E$ is increasing in $c$ and decreasing in $\sigma^2$, so $\sup E=E(1,0)=\tfrac{1/2}{1/3}=\tfrac32$, whence $2e^{-E}\ge2e^{-3/2}=0.4463\ldots$ uniformly.

\emph{Vacuity threshold.} The bound exceeds $1$ iff $E\le\log2$. At $\sigma^2=0$: $E=3c/2\le\log2\iff c\le\tfrac23\log2=0.4621$. At $\sigma^2=\tfrac14$: solving $c^2-\tfrac{2\log2}{3}c-\tfrac{\log2}{2}=0$ gives $c\le0.8635$.

\emph{Limit at low variance.} $\lim_{\sigma^2\to0}2e^{-E(c,\sigma^2)}=2e^{-3c/2}\ge2e^{-3/2}$: the bound does not vanish as $\sigpi\to0$, so it cannot certify that discovery is rare at a non-critical fork.

\emph{Comparison on the canonical fork.} On the configuration of Proposition~\ref{prop:tight} with $c=1-\varepsilon$ and $\sigma^2=\varepsilon(1-\varepsilon)$, the true probability is $\varepsilon$. Equation~\eqref{eq:cantelli} returns $\varepsilon$ exactly; \eqref{eq:bern} returns a quantity bounded below by $0.446$, an overstatement by a factor $\varepsilon^{-1}$.

\emph{Diagnosis.} The failure is structural rather than a matter of constants. Bernstein's exponential regime is driven by the sample count $n$ in $\sum_{i=1}^n X_i$; the useful exponent scales as $nc^2/(\sigma^2+bc/3)$. Here $n=1$ and $b=1$, so one is permanently pre-asymptotic and the $c/3$ term dominates the $\sigma^2$ term over most of the domain --- which is exactly why the $\sigma^2$ dependence, the object of interest, is washed out. Recovering an exponential requires reintroducing a count: over $N$ i.i.d. draws, $\Prob(\text{no discovery})=(1-p_c)^N\le e^{-Np_c}$, exponential in $N$ at rate $p_c\le\sigma^2/(\sigma^2+c^2)$. This is the statement Corollary~\ref{cor:budget} makes, with $\sigma^2$ appearing where it belongs.

\section{The Chain of Forks: Derivations}
\label{app:chain}

\subsection{Value and advantage identities}

By backward induction from $\Vpi(s_m)=1$: the wrong branch is absorbing with reward $0$, so $\Vpi(s_j)=p_j\Vpi(s_{j+1})$, giving $\Vpi(s_j)=P_j$. Then $\Qpi(s_j,\text{correct})=P_{j+1}$, $\Qpi(s_j,\text{wrong})=0$, and
\[
\Api(s_j,\text{correct})=P_{j+1}-P_j=(1-p_j)P_{j+1},\qquad
\Api(s_j,\text{wrong})=-P_j=-p_jP_{j+1}.
\]
For the policy variance, $\E[\Qpi]=P_j$ and $\E[(\Qpi)^2]=p_jP_{j+1}^2$, hence $\sigpi(s_j)=p_j(1-p_j)P_{j+1}^2$.

\emph{Budget consistency.} With $c=\Api(s_j,\text{correct})=(1-p_j)P_{j+1}$,
\[
1+\frac{c^2}{\sigpi(s_j)}=1+\frac{(1-p_j)^2P_{j+1}^2}{p_j(1-p_j)P_{j+1}^2}=\frac1{p_j},
\]
exactly the number of draws needed to sample the correct action once: Corollary~\ref{cor:budget} is attained.

\subsection{Proof of Proposition~\ref{prop:snr}}

\begin{proof}
Conditioned on $(s_j,\text{correct})$ the trajectory succeeds iff the entire suffix succeeds, so $G_j\sim\mathrm{Bernoulli}(P_{j+1})$. With the exact baseline $\Vpi(s_j)=P_j$,
\[
\E[\hat A^{\mathrm{MC}}]=P_{j+1}-P_j=\Api(s_j,\text{correct}),\qquad
\Var(\hat A^{\mathrm{MC}})=P_{j+1}(1-P_{j+1}),
\]
giving the stated SNR. Averaging $n$ i.i.d. rollouts scales the standard error by $n^{-1/2}$; setting $\sqrt n\,\mathrm{SNR}_j=1$ gives $n_j=(1-p_j)^{-2}(1-P_{j+1})/P_{j+1}\asymp(1-p_j)^{-2}P_{j+1}^{-1}$, and $P_{j+1}^{-1}=\exp(\sum_{k>j}\log(1/p_k))$. For the wrong branch, $G_j=0$ almost surely, so $\hat A^{\mathrm{MC}}=-P_j$ deterministically.
\end{proof}

\subsection{Proof of Proposition~\ref{prop:log}}

\begin{proof}
$\Vpi(s_j)=p_j\Vpi(s_{j+1})>0$; take logarithms. Each log-space regression target $\log \Vpi(s_j)-\log \Vpi(s_{j+1})=\log p_j$ is a single bounded increment, so the target is $O(1)$-conditioned at every horizon; how per-step errors propagate is then the standard TD contraction question, not a conditioning problem. In value space, an estimator with uniform absolute error $\delta$ yields a TD advantage whose bias is up to $2\delta$ against a true signal of magnitude $(1-p_j)P_{j+1}$; the estimate carries information only if $\delta\ll(1-p_j)P_{j+1}$, i.e. only under absolute accuracy exponentially small in $m-j$.
\end{proof}

\section{Fork Taxonomy: Identifiability and Gradient SNR}
\label{app:taxonomy}

\subsection{Identifiability of \texorpdfstring{$(\varepsilon,\Delta)$}{(eps,Delta)} from \texorpdfstring{$(\sigpi,H)$}{(variance,entropy)}}

For a binary fork with minority mass $\varepsilon\in(0,\tfrac12]$ and value gap $\Delta\in(0,1]$,
\[
\sigpi(s)=\varepsilon(1-\varepsilon)\Delta^2,\qquad
H(s)=-\varepsilon\log_2\varepsilon-(1-\varepsilon)\log_2(1-\varepsilon).
\]
The map $(\varepsilon,\Delta)\mapsto(\sigpi,H)$ is a bijection on $(0,\tfrac12]\times(0,1]$: $H$ is strictly increasing in $\varepsilon$ on $(0,\tfrac12]$, so $\varepsilon=H^{-1}(H(s))$, and then $\Delta=\big(\sigpi(s)/(\varepsilon(1-\varepsilon))\big)^{1/2}$. Measuring the pair --- and only the pair --- identifies the fork type; neither statistic alone suffices.

\subsection{Gradient SNR at a Type II fork}

Let the fork of Proposition~\ref{prop:tight} be softmax-parameterized with logits $(z_e,z_d)$, so $\pi(a_e)=\varepsilon$, $\pi(a_d)=1-\varepsilon$, and
\[
\frac{\partial\log\pi(a_e)}{\partial z_e}=1-\varepsilon,\qquad
\frac{\partial\log\pi(a_d)}{\partial z_e}=-\varepsilon.
\]
(The $z_d$-component of the gradient is the negation of the $z_e$-component, so the SNR below is unaffected by working with one coordinate.) With $\Api(s,a_e)=1-\varepsilon$, $\Api(s,a_d)=-\varepsilon$, the single-sample REINFORCE gradient $g=\Api(s,a)\,\partial_{z_e}\log\pi(a)$ takes value $(1-\varepsilon)^2$ w.p.\ $\varepsilon$ and $\varepsilon^2$ w.p.\ $1-\varepsilon$. Hence
\[
\E[g]=\varepsilon(1-\varepsilon)^2+(1-\varepsilon)\varepsilon^2=\varepsilon(1-\varepsilon),\qquad
\E[g^2]=\varepsilon(1-\varepsilon)^4+(1-\varepsilon)\varepsilon^4,
\]
so $\E[g]/\sqrt{\Var(g)}=\sqrt\varepsilon\,(1+O(\varepsilon))$ and $\Theta(1/\varepsilon)$ samples give unit gradient SNR. This matches Corollary~\ref{cor:budget} with $c=1-\varepsilon$, $\sigpi=\varepsilon(1-\varepsilon)$ --- budget $1/\varepsilon$ --- exactly, not merely in order. The agreement holds because score and advantage are perfectly aligned at a two-point fork, and it confirms that $\sigpi$ is the operative statistic for the gradient estimator, not merely for the return.

\end{document}